\newcommand{\Rset}{\mathbb R}
\newcommand{\indicator}[1]{ {\mathbb I}[#1] }
\DeclareMathOperator*{\argmax}{arg\,max}
\newcommand{\BlackBox}{\rule{1.5ex}{1.5ex}}  
\newenvironment{proof}{\par\noindent{\bf Proof\ }}{\hfill\BlackBox\\[2mm]}
\newtheorem{example}{Example} 
\newtheorem{theorem}{Theorem}
\newtheorem{proposition}[theorem]{Proposition}
\newtheorem{definition}[theorem]{Definition}
\newcommand{\boldlambda}{{\boldsymbol \lambda}}
\newcommand{\Hone}{{\mathbf{1}}}
\newcommand{\Hid}{ {\mathit id}  }
\title{A general framework for defining and optimizing robustness}
\author{
  Alessandro Tibo\\
  Aalborg University, Institut for Datalogi\\
  \texttt{alessandro@cs.aau.dk}\\
  \And
  Manfred Jaeger \\
  Aalborg University, Institut for Datalogi\\
  \texttt{jaeger@cs.aau.dk}\\
  \And
  Kim G.~Larsen \\
  Aalborg University, Institut for Datalogi\\
  \texttt{kgl@cs.aau.dk}
}
\begin{document}
\maketitle

\begin{abstract}
  Robustness of neural networks has recently attracted a great amount of interest. The many investigations in this area
  lack a precise common foundation of robustness concepts. Therefore, in this paper, we propose a rigorous and flexible
  framework for defining different types of robustness properties for classifiers.
  Our robustness concept is based on postulates that
  robustness of a classifier should be considered as a property that is independent of accuracy, and that it should be
  defined in purely mathematical terms without reliance on algorithmic procedures for its measurement. 
  We develop a very general robustness framework that is applicable to any type of classification model, and that
  encompasses relevant robustness concepts for investigations ranging from safety against adversarial attacks to
  transferability of models to new domains. For two prototypical, distinct robustness objectives we then
  propose new learning approaches based on neural network co-training strategies for obtaining image classifiers
  optimized for these respective objectives.
\end{abstract}

\section{Introduction}\label{sec:intro}

Machine learning models have reached impressive levels of performance for solving various types of classification problems.
Famously, convolutional neural networks  \cite{fukushima1980neocognitron,lecun1989backpropagation},
are the state-of-the-art method for image classification (see, e.g., \cite{szegedy2017inception,xie2019self,kolesnikov2019large}).
However, the phenomenon of \emph{adversarial examples} \cite{szegedy2013intriguing,goodfellow2014explaining} has raised
serious concerns about the reliability of these machine learning solutions, especially in safety-critical applications.
Autonomous driving and counterfeit bill detection are  two intuitive examples where safety concerns play a fundamental role.
The vulnerability of machine learning models to adversarial examples is widely interpreted as a lack of
\emph{robustness} of these models, and a large amount of recent work investigates robustness properties, with a
strong focus on deep neural network models. Research in this area has been pursued both from the perspective
of formal verification with its traditional objective of rigorously proving safety properties of hard- and software
systems~\citep{ruan2018reachability,ehlers2017formal,dvijotham2018dual,bunel2018,Cisse-etal-2017,wicker2020probabilistic},
and from the perspective
of machine learning with its traditional objective of optimizing expected values of performance measures~\citep{madry2017towards,hein2017formal,elsayed2018large,sinha2019harnessing,su2019one}.

The original motivation derived from adversarial examples led to a focus on robustness against small
adversarial perturbations. 
As noted by \cite{gilmer2019adversarial}, this needs to be distinguished
from  robustness under larger scale distribution shifts.
The former is a particular
concern in order to guard against vulnerability to targeted attacks, whereas the latter is
important for applications where the characteristics of test cases may (eventually) differ from
the original training data.

In most works, the focus is on one of two separate issues: either \emph{assessing} the robustness properties
of a given model, or \emph{improving} the robustness properties of learned classifiers by developing new training
techniques. Works with a background in formal verification mostly focus on the first issue, while in the field
of machine learning the second  plays a larger role. However, in machine learning, too, the problem
of assessing robustness has received a lot of attention, especially in connection with designing adversarial
attacks \cite{goodfellow2014explaining,madry2017towards,hein2017formal,su2019one}.

The underlying concept of robustness used in different works
often only is implicit in a proposed loss function and/or the
experimental protocol used to evaluate robustness properties. As a result,  there does not yet
appear to be a common full understanding of what robustness actually is, and how it relates to other
properties of classification models. For example, \cite{tsipras2018robustness} have argued that the objectives
of robustness and accuracy are in conflict with each other, whereas \cite{stutz2019disentangling} come to
somewhat opposite conclusions: they argue that robustness and generalization capabilities are in conflict
only when robustness is designed to defend against ``off-manifold'' attacks, i.e., adversarial examples that
do not follow the data distribution, whereas they are consistent with each other in the scenario of
``on-manifold'' adversarial examples.
Similarly, there exist somewhat conflicting results with regard to the question whether robustness
against adversarial examples and against distribution shifts are in conflict with each other, or whether
these two objectives can be aligned~\citep{gilmer2019adversarial,rusak2020simple}.

Underlying these differences are not so much theoretical or empirical
discrepancies, as conceptual differences about what one wants to capture with robustness. In this paper we first
aim to put the analysis of robustness properties on a more solid foundation by  developing
a general framework for the specification
of different robustness concepts that captures most of the concepts previously used (implicitly or explicitly)
in the literature, and that helps to clarify their basic structural differences.
We propose a  flexible framework that captures in a coherent manner different robustness concepts
ranging from safety against adversarial attacks to the ability to adapt to distribution changes.
In this manner, we draw a direct link between the goals of learning robust and safe classifiers on
the one hand, and the traditional objectives of transfer learning on the other hand.

We then use our robustness definitions as the conceptual basis for developing robust learning
techniques in two protypical application settings: learning neural network classifiers that are
robust against small adversarial perturbations, and learning image classifiers that are robust
with respect to larger distribution shifts. 
Like most of previous research on robustness which focusses on neural network technology for image
classification problems, we, too, use readily available image benchmark datasets and
standard neural network architectures
to evaluate our methods. However, it is one of our core objectives that the conceptual framework is
generally applicable to all types of classification models, and the developed techniques are not specifically
tied to image data.

The key contributions of this paper are:
\begin{itemize}
\item A general analysis and specification framework for robustness concepts for classification models.
\item Development of robust learning techniques for two prototypical instantiations of the
  robustness framework covering two very different points in the robustness spectrum.
\end{itemize}


\section{A Robustness Framework}
\label{sec:shades}
We are considering classification problems given by an input space
$\mathcal{X}$, a label space $\mathcal{Y}= \{0,\ldots,K-1\}$, and a data distribution consisting of
an input distribution $P(X)$, and a conditional label distribution $P(Y|X)$. A classifier is any
mapping $f:\mathcal{X}\rightarrow \mathcal{Y}$. By as slight abuse of notation we also use $Y$ to
denote the \emph{true labeling functions} $Y: \mathcal{X}\rightarrow \mathcal{Y}$ defined by
$Y(x):=\argmax_y P(Y=y|X=x)$.
Most existing notions of robustness require that $\mathcal{X}$ is endowed with a metric.
Usually, ${\mathcal X} \subseteq \Rset^d$ for some $d\geq 1$, with a metric induced by
one of the standard norms on $\Rset^d$. In the following we will assume that ${\mathcal x}$ is
of this form.

Underlying our robustness framework are the  following two postulates:

\begin{description}
\item[P1.]  Robustness is orthogonal to accuracy (or generalization). In particular, a constant classifier with
  $f(x)=y$ for some fixed $y\in\mathcal{Y}$ and all $x\in \mathcal{X}$ always is maximally robust.
  This does not entail that robustness and accuracy are necessarily in conflict; only that they are
  separate, distinguishable objectives.
\item[P2.]   Robustness of a model $f$ is defined only in terms of $f$ itself, and the given classification problem.
  In particular, robustness is not dependent on specific (algorithmic) tools for assessing robustness.
\end{description}

P1 makes our concept of robustness different from robustness metrics used  e.g. by
\cite{tramer2019adversarial} or \cite{gilmer2019adversarial}, where robustness measures are directly
tied to accuracy. These differences notwithstanding, the measures we propose are consistent with these
earlier proposals in that they arguably extract their pure robustness components.
P2 implies that e.g. \emph{success rates} of particular adversarial example generators as used e.g.
in~\cite{su2018robustness},  or the CLEVER scoring algorithm of~\cite{weng2018evaluating} 
are not robustness measures in our sense per se, but rather tools for approximate evaluations of
proper robustness measures.

%
%


P2 can be refined to a hierarchy of robustness concepts reflecting their dependence
on different elements: we distinguish three types of robustness concepts, according to whether
robustness is defined in terms of
\begin{description}
\item[Type 1:] only the classifier $f$,
\item[Type 2:] the classifier $f$, and the input distribution $P(X)$
\item[Type 3:] $f, P(X)$, and the label distribution $P(Y|X)$  
\end{description}


We now develop a  flexible framework  that can accommodate a wide
range of robustness concepts of all three types. In the following, 
${\cal A}({\cal X})$ denotes the $\sigma$-algebra of Borel sets on ${\cal X}$.

\begin{definition}
\label{def:robustnessfct}  
(Basic $Q$-Robustness Measure)
Let $f$ be a classifier and
\begin{equation}
  \label{eq:Qkernel}
  \begin{array}{llll}
    Q: & {\cal X}\times{\cal A}({\cal X})  & \rightarrow & [0,1] \\
    & (x,A) & \mapsto & Q(A|x) 
  \end{array}
\end{equation}
 a transition kernel on ${\cal X}$. 
The function
  \begin{equation}
    \label{eq:basicrobustnessfct}
    \begin{array}{rcl}
     \rho_{Q}^f:\ \  \mathcal{X} & \rightarrow & [0,1] \\
      x & \mapsto & Q( \{ x' : f(x')=f(x)  \} |x)
    \end{array}
  \end{equation}
  is called the \emph{basic robustness measure} of $f$ with respect to $Q$.
\end{definition}

In conjunction with a distribution $P()$ on ${\cal X}$, a kernel $Q$ defines the probability measure
$\int_{\cal X} Q(\cdot|x)dP(x)$ on ${\cal X}$. We denote this measure as $Q\circ P$.

\begin{example}
  \label{ex:Qepsilon}
  Let $\epsilon >0$. We denote with $B_{\epsilon}(x)$ the open $\epsilon$-ball around $x$ (according
  to a chosen metric on ${\cal X}$). 
  Let  $Q_{\epsilon}( \cdot | x)$ be the uniform distribution on $B_{\epsilon}(x)$ ($x\in {\cal X}$).
  $\rho_{Q_{\epsilon}}^f(x)$ then is the probability that a perturbation of $x$ with a uniform random noise
  vector of length $\leq \epsilon$ is assigned by $f$ the same label as $x$. 
\end{example}

The basic robustness measure measures the stability of the classifiers output
when examples $x$ are randomly perturbed according to the distribution $Q(\cdot|x)$.
Integrating $\rho_{Q}^f(x)$ over $x$  gives an overall robustness
score for $f$ relative to  $Q$-perturbations. 
However,
simply taking the integral over $\rho_{Q}^f$ is too crude to encode many important versions of
robustness. We therefore generalize the basic robustness measure by adding functions that allow
to extract specific features of $\rho_{Q}^f$.

\begin{definition}
  ($Q,H,G$-Robustness Function)
  Let $H:[0,1]\rightarrow \Rset^+$ be a monotone increasing function, and
  $G: {\cal X}\rightarrow \Rset^+$. Then
  \begin{equation}
    \label{eq:robustnessfct}
    \begin{array}{rcl}
      {\rho}_{Q,H,G}^f:\ \  \mathcal{X} & \rightarrow & \Rset^+ \\
      x & \mapsto & H(\rho_{Q}^f(x))G(x)   
    \end{array}
  \end{equation}
  is the \emph{robustness function} of $f$ defined by $Q,H$ and $G$.
\end{definition}

Integrating the robustness function then gives a robustness score for a classifier:

\begin{definition} \label{def:Qrobscore}
    ($Q,H,G$-Robustness Score)
	For a given $f$, $Q$,   $H$, and $G$ we define the $Q,H,G$-robustness score
	\begin{equation}
         \label{eq:Qrobscore}
		R_{Q,H,G}^f = \int_{\mathcal{X}} {\rho}^f_{Q,H,G}(x) dx.
	\end{equation}
\end{definition}

We note that for a constant classifier $f(x)=i\in{\cal Y}\ (x\in{\cal X})$ we have
$\rho^f_Q(x)=1$ for all $x$, and by the monotony of $H$ then also
$H(\rho^f_Q(x))$ is the maximal possible value for all $x$. This implies that for all
$Q,H,G$, 
$R_{Q,H,G}^f$ is maximal for a constant classifier, i.e., postulate P1 is satisfied.

The robustness score of Definition~\ref{def:Qrobscore} only relates to a single perturbation
model $Q$. Often, one will want to consider different forms of perturbations, possibly weighted by
a probability distribution.

\begin{definition} \label{def:QQrobscore}
    (${\cal Q},H,G$-Robustness Score)
  Let ${\cal Q}$ be a family of transition
  kernels $Q_{\boldlambda}$ parameterized by  $\boldlambda\in A\subseteq \Rset^k$.
  Let ${\cal Q}$ be equipped with a probability distribution defined by a density function
  $q_{\cal Q}$ on $A$. Let $H$ and $G$ as in Definition~\ref{def:Qrobscore}. Then we define the
  ${\cal Q}$-robustness score as
  \begin{equation}
         \label{eq:QQrobscore}
		R_{{\cal Q},H,G}^f = \int_A  R_{Q_{\boldlambda},H,G}^f   q_{\cal Q}(\boldlambda)  d\boldlambda .
	\end{equation}
\end{definition}

We next illustrate how the general robustness concepts introduced by
Definitions~\ref{def:robustnessfct}-\ref{def:QQrobscore} can be instantiated to obtain previously proposed
and novel robustness concepts of types 1-3.
 To simplify descriptions, from now on we assume that $\mathcal{X}$ is the $m$-dimensional unit hypercube,
  and we denote with $d =  \sqrt{m}$ its diameter.
  In this paper, we will consider for $H$ only  the two functions

  \begin{displaymath}
    \begin{array}{llll}
\label{eq:Hindicator}  
H(p) & = &\Hone(p) & := \indicator{p=1}, \ \ \mbox{and}\\
H(p) & = &\Hid(p) & :=  p
\end{array}
  \end{displaymath}

with $\indicator{\cdot}$ the indicator function. Thus, with $H=\Hone$ we are only interested in whether
the predicted label of $x$ will be almost surely preserved under the perturbation $Q(\cdot|x)$, which
leads to robustness concepts related to safety guarantees under adversarial perturbations, wheras
$H=\Hid$ leads to robustness concepts concerned with preservation of predictive accuracy.

\begin{example}
  \label{ex:geometry}
  (Geometry of decision regions; type 1.)
  For $\epsilon > 0$ let $Q_{\epsilon}$ be defined as in Example~\ref{ex:Qepsilon}. Define
  \begin{displaymath}
    {\cal Q}_{[0,d]} := \{ Q_{\epsilon} | \epsilon\in [0,d]\}  
  \end{displaymath}
  and let $q_{ {\cal Q}_0}$ be the uniform density on $[0,d]$. 
  Let $H=\Hone$,
  and $G(x)\equiv 1$. Under mild regularity conditions on
$f$\footnote{e.g. that $f(x)$ be defined as the argmax of a set of continuous discriminant
  functions  $f_y$ ($y\in{\cal Y}$)} we then have that
\begin{equation}
  \label{eq:indicator}
  \Hone(\rho^f_{Q_{\epsilon}}(x))=\indicator{\forall x'\in B_{\epsilon}(x): f(x')=f(x) },
\end{equation}\
i.e. the probabilistic statement $Q_{\epsilon}(\ldots)=1$ can be sharpened into a categorical ``for all'' statement.
Then
\begin{equation}
  R^f_{{\cal Q}_{[0,d]},\Hid,1}=  
  \int_0^d \int_{\cal X} \rho^f_{Q_{\epsilon},\Hone,1}(x) dx d\epsilon
  =  \int_{\cal X}\int_0^d\rho^f_{Q_{\epsilon},\Hone,1}(x) d\epsilon dx.
  \label{eq:exgeometry}
\end{equation}
The inner integral on the right is equal to the minimum distance from $x$ to a
decision boundary of $f$, i.e., the classifier margin at $x$.
Integrating over ${\cal X}$ then gives that $R^f_{{\cal Q}_{[0,d]},\Hid,1}$ is the average margin
for points $x\in {\cal X}$. This robustness measure only depends on the geometric complexity
of the decision regions of $f$.
\end{example}

\begin{example}
  \label{ex:succrates}
  (Margin curves; type 2.)
  
  Let ${\cal Q}_{[0,d]}$ be as in Example~\ref{ex:geometry}. Assume that $P(X)$ has a density function
  $p(x)$ relative to the uniform distribution, and let $G(x):=p(x)$. Then, under the same
  regularity assumptions as in Example~\ref{ex:geometry}, 
  $R_{Q_{\epsilon},\Hone,p}$
  is the probability that a point $x$ sampled according to $P(X)$ has no adversarial example at
  distance $\leq \epsilon$. Empirical estimates of this integral based on a number of test points
  $x_i$ correspond to robustness scores in terms of success rates of adversarial example generators
  (where the precision of the estimate then also is affected by the effectiveness of the generator).
  Seen as a
  function of $\epsilon$,  $R^f_{Q_{\epsilon},\Hone,p}$ defines the \emph{margin curves}
  of~\cite{gopfert2019adversarial}
  (up to a $1-\ldots$ inversion).
  Integrating  over $\epsilon\in[0,d]$ yields $R^f_{{\cal Q}_{[0,d]},\Hone,p}$, which measures robustness as the
  area under the margin curve.
\end{example}



\begin{example}
  (label aware; type 3.)
  Label-aware (type 3) robustness functions can be obtained by conditioning the perturbation
  distribution $Q_{\epsilon}(\cdot | x)$  also on the label of $x$. Thus we let
  \begin{displaymath}
    Q_{\epsilon,Y}(\cdot | x):= Q_{\epsilon}(\cdot |x)| \{ x': Y(x')=Y(x) \},
  \end{displaymath}
  where we use the general notation $P|A$ to denote a probability distribution $P$ conditioned on
  an event $A\in {\cal A}({\cal X})$. 
  With $H$ and $G$ as in Example~\ref{ex:succrates}, one now
  obtains the robustness concept implicitly (through the definition of
  the \emph{expected adversarial loss} function) used in~\cite{tsipras2018robustness}
  and adapted in~\cite{gopfert2019adversarial}.
\end{example}

In all the preceding examples the data distribution $P(X)$ was only used to weigh the contribution
of points $x\in {\cal X}$ to the overall robustness score. It was not used in the definition of
the perturbation model ${\cal Q}$.
The resulting robustness models then capture scenarios where datapoints generated by $Q\circ P$ can
be  off-manifold  in the sense of~\cite{stutz2019disentangling}. For the more conservative setting of
robustness with respect to 
on-manifold adversarial examples, one can also use $P()$ in the definition of $Q$ by letting
  \begin{displaymath}
    Q_{\epsilon,P}(\cdot|x):=P | B_{\epsilon}(x).
  \end{displaymath}
  In all preceding examples, we can substitute $Q_{\epsilon,P}$ for $Q_{\epsilon}$, and thereby obtain
  on-manifold versions of the given robustness concepts. The question of whether an off-manifold or on-manifold
  concept of robustness is more appropriate is quite subtle. The former is more relevant when robustness
  against malicious attacks or substantial distribution changes is required; the latter may be sufficient
  when only robustness against distribution changes is required that are induced e.g. by changing
  observation biases, but do not modify the underlying nature of the sampling population.

None of the preceding definitions and examples takes the accuracy of the model $f$ into account, i.e., we
never considered whether $f(x)=Y(x)$. This is in line with our stipulation that robustness as a
concept is orthogonal to accuracy. Obviously, one  ultimately wants robustness to lead to accuracy
gains under some conditions. In order to link robustness to accuracy, we first give and
explicit definition for an assumption that usually  implicitly made for
adversarial perturbations or common corruptions. 

\begin{definition}
  Let $P$ be a distribution on ${\cal X}$ with density function $p()$,
  and $\epsilon>0$. A kernel $Q$ is $(P,\epsilon)$-\emph{label preserving} if
  \begin{displaymath}
    \int Q(\{x': Y(x') = Y(x) | x\})p(x)dx \geq 1-\epsilon
  \end{displaymath}
\end{definition}

Viewing $Q$ as a formal model for adversarial perturbations or corruption mechanisms,
standard assumptions in the literature would translate into an assumption that
$Q$ is $(P,0)$-label preserving. 
We can  now formulate the following trivial but noteworthy relationship between robustness and
accuracy:

\begin{proposition}
  \label{prop:accpreservation}
  Let $f$ have accuracy $\alpha$ on data sampled according to the distribution $P$. Let $Q$ be
  $(P,\epsilon)$-\emph{label preserving}, and assume that $R^f_{Q,\Hid,p}= 1-\delta$.
  Then $f$ has accuracy on data sampled according to $Q\circ P$ of at least $\alpha-\epsilon-\delta$.
\end{proposition}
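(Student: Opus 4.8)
The plan is to realize $\{x':f(x')=Y(x')\}$ as an event containing the intersection of three favorable events, and then to bound the probability of that intersection by a union bound on the complements. First I would set up the joint probability space on ${\cal X}\times{\cal X}$ carrying the measure $\mu(dx,dx'):=Q(dx'\,|\,x)\,dP(x)$; under $\mu$ the second coordinate $x'$ is distributed exactly as $Q\circ P$, so the accuracy of $f$ under $Q\circ P$ equals $\int_{\cal X}\!\int_{\cal X}\indicator{f(x')=Y(x')}\,\mu(dx,dx')$.

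Next I would introduce the three events
\[
  A:=\{f(x)=Y(x)\},\qquad
  B:=\{Y(x')=Y(x)\},\qquad
  C:=\{f(x')=f(x)\}.
\]
The decisive observation is a purely logical implication: on $A\cap B\cap C$ one has $f(x')=f(x)=Y(x)=Y(x')$, hence $f(x')=Y(x')$. Therefore $\{x':f(x')=Y(x')\}\supseteq A\cap B\cap C$, and the accuracy of $f$ under $Q\circ P$ is at least $\mu(A\cap B\cap C)$.

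I would then read the three marginal probabilities off the hypotheses. Since $A$ depends on $x$ alone, $\mu(A)=\alpha$ is exactly the accuracy assumption, so $\mu(A^{c})=1-\alpha$. The $(P,\epsilon)$-label-preserving property is precisely the statement $\mu(B)\geq 1-\epsilon$, giving $\mu(B^{c})\leq\epsilon$. Finally, unwinding the definitions of the robustness function and of the robustness score (Definition~\ref{def:Qrobscore}) with $H=\Hid$ and $G=p$ yields $\mu(C)=\int_{\cal X}\rho^f_Q(x)\,p(x)\,dx=R^f_{Q,\Hid,p}=1-\delta$, so $\mu(C^{c})=\delta$. The complement form of the union bound then gives $\mu(A\cap B\cap C)\geq 1-\mu(A^{c})-\mu(B^{c})-\mu(C^{c})\geq \alpha-\epsilon-\delta$, which is the asserted lower bound.

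There is no real obstacle here; as the authors themselves note, the statement is trivial but noteworthy. The only points demanding a little care are bookkeeping ones: making sure each event is evaluated under the correct marginal of $\mu$ (in particular, lifting the $x$-event $A$ to the joint space before applying the union bound), and correctly matching $\mu(C)$ with the robustness score $R^f_{Q,\Hid,p}$ as given by its definition rather than with a superficially similar integral.
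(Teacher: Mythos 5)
Your proof is correct and is essentially the paper's own argument: the paper also reduces to the intersection of the events $f(x)=Y(x)$, $Y(x')=Y(x)$, $f(x')=f(x)$ and bounds it via the complements, merely writing the union bound as a chain of integral inequalities (first restricting the outer integral to $A$, then extending the error integrals back to all of ${\cal X}$) rather than on an explicit joint measure $\mu$ on ${\cal X}\times{\cal X}$. Your joint-space phrasing is a clean repackaging of the same steps, and your identifications of $\mu(A)=\alpha$, $\mu(B)\geq 1-\epsilon$, and $\mu(C)=R^f_{Q,\Hid,p}=1-\delta$ all check out.
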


\begin{proof}
  Let $A=\{x | f(x)=Y(x)\}$ be the set of inputs on which $f$ makes correct predictions. Then
  $P(A)=\alpha$. Using that $Q(\cdot|\cdot)$ is non-negative, and therefore the integral of
  $Q(\cdot|x)$ over the full space is always greater or equal the partial integral over $A$, we
  obtain:
  \begin{multline}
    (Q\circ P)(A)=\int Q(A|x)p(x)dx\geq \int_A Q(A|x)p(x)dx\\
    \geq  \int_A Q( \{ x':Y(x')=Y(x)  \}\cap\{ x': f(x')=f(x)\}   |x)p(x)dx \\
    \geq  \int_A (1-Q( \{ x':Y(x')\neq Y(x)  \}|x)-Q(\{ x': f(x')\neq f(x)\}   |x))p(x)dx \\
    =\alpha - \int_AQ( \{ x':Y(x')\neq Y(x)  \}|x)p(x)dx - \int_AQ(\{ x': f(x')\neq f(x)\}   |x)p(x)dx \\
    \geq \alpha - \int Q( \{ x':Y(x')\neq Y(x)  \}|x)p(x)dx - \int Q(\{ x': f(x')\neq f(x)\}   |x)p(x)dx
    \geq \alpha-\epsilon-\delta.
  \end{multline}
\end{proof}


%

\section{Optimizing a Given Robustness Measure}\label{sec:methodology}

In this section we consider the case where a target robustness version is fully specified in terms
of specific ${\cal Q}, H, G$. An important case of this nature is the one described in Example~\ref{ex:succrates},
which has been quite extensively considered for neural network classifiers.
We therefore  also focus in this section on neural network classifiers  $f$.
We assume that classification is performed
by taking the $\argmax$ over an output layer produced by a softmax function. We denote with
$\hat{f}(x)\in[0,1]^K$ the network output for input $x$; $\hat{f}(x)[j]$ denotes the $j$th component of
$\hat{f}(x)$.

\subsection{Robust co-training for  $R^{\lowercase{f}}_{{\cal Q}_{[0,{\lowercase{d}}]},{\lowercase{\Hone}},{\lowercase{p}}}$}\label{sec:ouradvgenerator}

Existing approaches for training $f$ with high $R^f_{Q_{\epsilon},\Hone,p}$ scores include data augmentation
with adversarial examples~\cite{madry2017towards} and the use of customized regularization terms in the loss
function~\citep{hein2017formal}. The construction of adversarial examples is computationally expensive,
especially when it has to be repeated in each iteration of the training loop for $f$. We therefore
propose a hybrid approach that employs computationally cheap adversarial example generation to
define a regularization term in the loss function for $f$.

\begin{figure}[b]
\centering
\centerline{\includegraphics[width=0.3\columnwidth]{./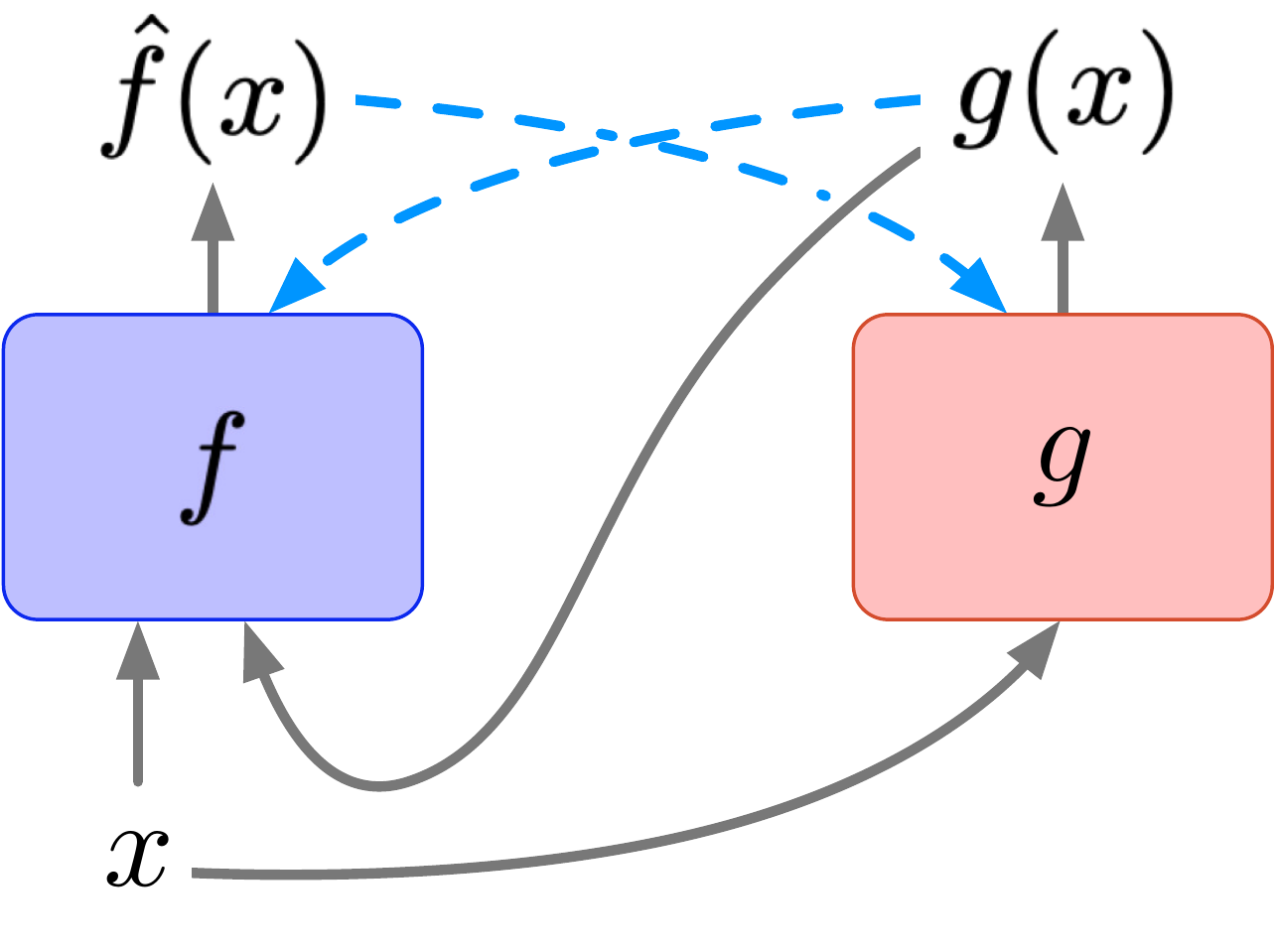}}
\caption{Co-training of classifier $f$ and adversarial generator. Solid arrows represent
input/output relations; dashed arrows represent dependence of the loss function.}
\label{fig:robustness-framework}
\end{figure}

Concretely, our approach consists of co-training the classifier $f$, and a generator $g$ that
maps inputs $x$ to approximate adversarial examples $g(x)$. Given a (current) classifier $f$, our
loss function for $g$ for a training example $x$ is
\begin{equation}\label{eq:proposed:reg}
{\cal L}_g(x,\theta_g) =  \frac{1}{2} \| g(x;\theta_g ) - x \|^2 +
\max(\hat{f}(g(x;\theta_G))[i] - \hat{f}(g(x;\theta_G))[j],\ 0) ,  
\end{equation}
where $\theta_g$ are the trainable parameters of $g$, $i=f(x)$ is the label associated with $x$ by $f$,
and $j\in{\cal Y}$  is the label with the second highest value in $\hat{f}(g(x;\theta_G))$.
The first term of the loss function enforces that $g(x)$ is close to $x$.
The second term is equal to zero if $f(g(x))\neq f(x)$, i.e., $g(x)$ is adversarial in the sense
that it is labeled differently from $x$.

Given a (current) generator $g$, the classifier $f$ is trained using the following loss function
for a labeled training example $(x,y)$: 
\begin{equation}
  \label{eq:final:loss:ex:1}
  {\cal L}_f(x,\theta_f) = -\log( \hat{f}(x)[y]  ) +
  \lambda\frac{CE(\hat{f}(x),\hat{f}(g(x))}{1+ \parallel x-g(x)\parallel^2  }.
\end{equation}
The first term here is the standard log loss. The second term is our robustness regularizer.
It is essentially inversely proportional to how successful the generator $g$ is: the loss increases
if $g$ finds close adversarial examples, i.e. $\parallel x-g(x)\parallel^2$ is small, and it
increases when the output of $f$ at $g(x)$ is very different from the output of $f$ at $x$, as
measured by $CE(\hat{f}(x),\hat{f}(g(x))$.

The models $f$ and $g$ are co-trained by an alternating stochastic gradient descent as
shown in Algorithm~\ref{alg:reg:training}.



\begin{algorithm}
\SetAlgoLined
 {\bfseries Input:} $D$, $f$, $g$, batchsize\;
 Randomly initialize $f$ and $g$\;
 \While{not converged}{
  $\{I^{(f)}_1,\ldots,I^{(f)}_K \}$ = shuffle(D)  //create $K$ mini-batches according to batchsize\;
  $\{I^{(g)}_1,\ldots,I^{(g)}_K \}$ = shuffle(D)\;
  \For{$i=1$ {\bfseries to} $K$}{
  	$x'_j = g(x_j) \quad \forall \ j = 1,\ldots, |I^{(f)}_i| $\;
  	Perform SGD update on $f$ using augmented mini-batch $\{x_j,x'_j \ | \ j=1, \ldots, |I_i^{(f)}| \}$\;
  	Perform SGD update on $g$ using inputs from $I_i^{(g)}$\;
  }
 }
 \caption{Regularization Training for $R^f_{{\cal Q}_{[0,d]},\Hone,p}$}\label{alg:reg:training}
\end{algorithm}

\subsection{Robustness Evaluation}\label{sec:robeval}

Having learned a classifier $f$, we want to evaluate its $R^f_{{\cal Q}_{[0,d]},\Hone,p}$ score. To obtain
an empirical estimate based on a set of test points $x^{(i)}$, one would ideally determine precisely
the classifier margin at each $x^{(i)}$. Since this is infeasible, we resort as usual to
the distance between $x^{(i)}$ and an adversarial example for $x^{(i)}$ as an upper approximation
of the margin. For this purpose we could re-use the generator model $g$ constructed in our
robust training approach. However, this would lead to an unfair advantage of our models in comparsion
to models learned using other robust training approaches. 
For scoring, we therefore construct adversarial examples $x^*$ using an ensemble approach, 
where we consider a set of state-of-the-art strategies for generating adversarial examples, FGSM~\cite{goodfellow2014explaining}, PGD~\cite{madry2017towards}, and BG~\cite{hein2017formal}. Starting from
a trained model $f$, for each $x$ in the test set, we construct the set $Adv(x)=\{ x_{FGSM}, x_{PDG}, x_{BG}\}$, where its elements are generated according to the strategies mentioned above. Then, for each 
$x' \in Adv(x)$ we consider $\hat{x} = x + \epsilon_{x'} (x' - x)$, where $\epsilon$ 
is the smallest positive real number such that $f(x + \epsilon (x' - x)) \neq f(x)\}$. 
In practise $\epsilon$ is estimated by a binary search under the assumption that there is only 
one decision boundary on the line between $x$ and $x'$.
Finally, we select from among $\hat{x}_{FGSM}$, $\hat{x}_{PDG}$, $\hat{x}_{BG}$ the one that lies closest to
$x$ as the adversarial example $x^*$.


\begin{figure}[ht]
\centering
\centerline{ \includegraphics[width=0.4\columnwidth]{./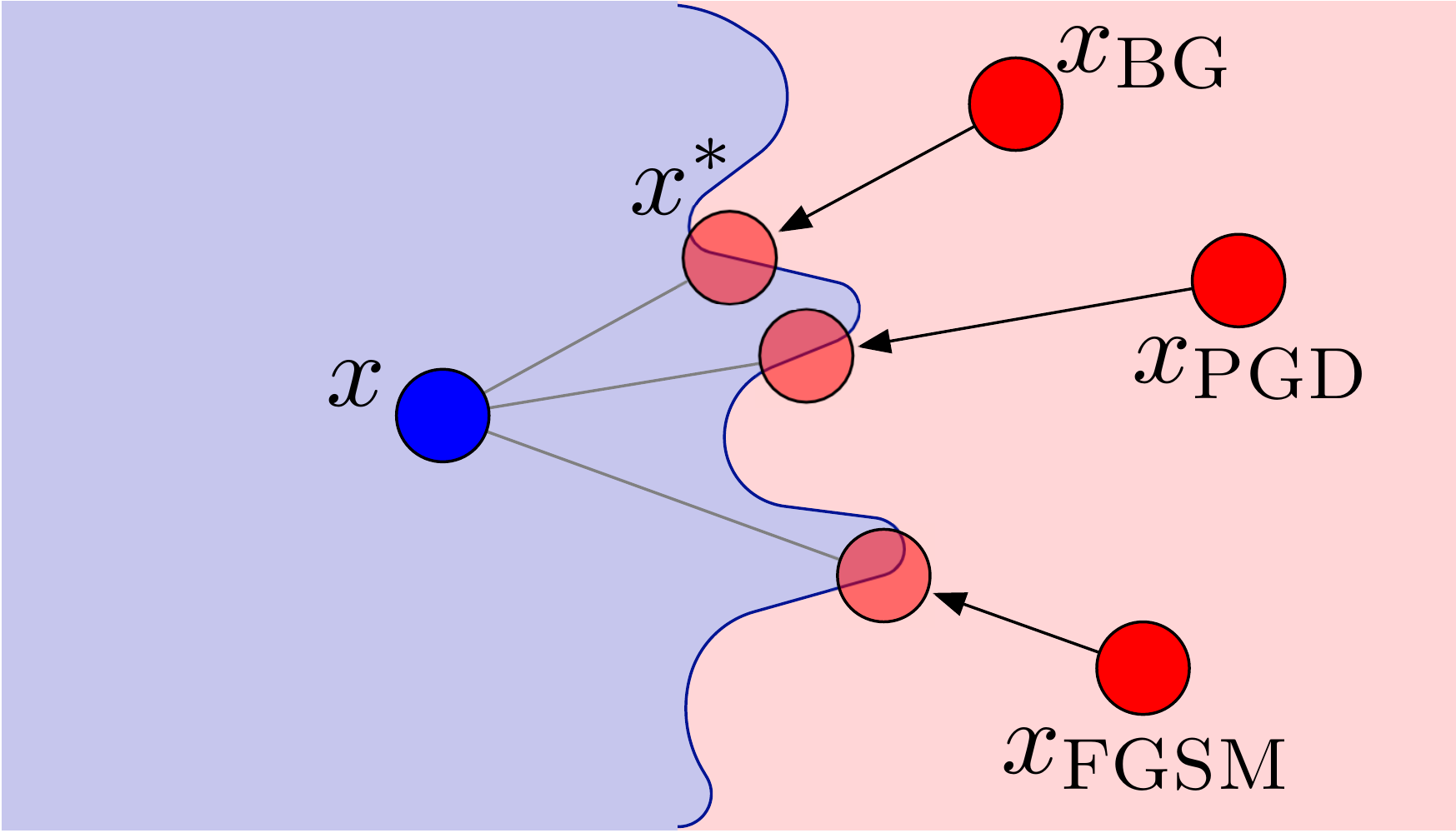}}
\caption{Ensemble approach for adversarial example construction. Shown are the decision boundary, 
  a test point $x$, the adversarial examples generated for $x$ by the methods BG, PGD, FGSM, their
  associated points $\hat{x}$, and the finally selected $x^*$.}
\label{fig:rob:eval:cartoon}
\end{figure}

\subsection{Experimental results}\label{sec:experiments}


\begin{figure*}[t]
\centering
\centerline{\includegraphics[width=\textwidth]{./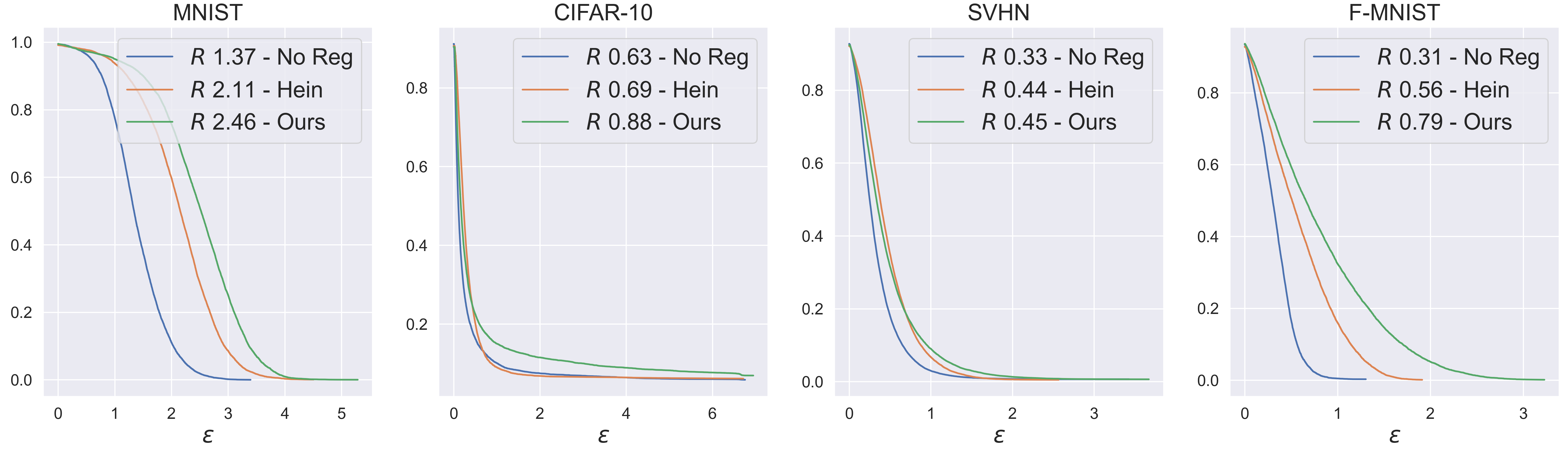}}
\caption{Qualitative and quantitative robustness results for the MNIST, CIFAR-10, SVHN, and F-MNIST. The legends report the area under the curves as quantitative measure of robustness. Higher areas corresponds to more robust models.}
\label{fig:gopfert}
\end{figure*}

We evaluated the proposed training approach for Example~\ref{ex:succrates} on 4 datasets: MNIST~\cite{lecun1998gradient}, CIFAR-10~\cite{krizhevsky2009learning}, SVHN~\cite{netzer2011reading}, and F(ashion)-MNIST~\cite{xiao2017fashion}. 
MNIST contains $28\times 28$ pixel gray scale handwritten digits uniformly distributed over $10$ classes, divided in $60,000$ and $10,000$ examples for training and test respectively. SVHN contains $32 \times 32$ pixel RGB images of street view house numbers uniformly distributed over $10$ classes, divided in $73,257$ and $26,032$ examples for training and test respectively. CIFAR-10 contains $32 \times 32$ pixel RGB natural images uniformly distributed over $10$ classes, divided in $50,000$ and $10,000$ examples for training and test respectively. Finally F-MNIST is a dataset of Zalando's article $28\times 28$ images, consisting of a training set of $60,000$ examples and a test set of $10,000$ examples, uniformly distributed within $10$ classes.

For each dataset we compared 3 neural networks $f$ which share the same structure: a model trained without any robustness regularization, a model trained with the regularization penalty from~\cite{hein2017formal} 
, and a model trained according to our proposed regularization penalty defined in Equation \ref{eq:final:loss:ex:1}, in conjunction with the adversarial generator described in Section~\ref{sec:ouradvgenerator}.

For MNIST, SVHN, and F-MNIST we used for  $f$ the same convolutional neural network architecture.
For CIFAR-10, we used instead the state-of-the-art model ResNet 20 described in \cite{he2016deep}.
The generator $g$ is another convolutional neural network, whose structure is the same for all the datasets (with the exception of the filters of the last layer that depend whether the input picture is grayscale or RGB). The  implementation details of $f$ and $g$ are described in the supplementary material. 
We  first trained the models without any robustness regularization, obtaining  accuracies as
shown in the first column of Table~\ref{tab:accuracies}. We then trained using the robustness regularizer of
\cite{hein2017formal} (in the following referred to as {\sc Hein} method) and our approach using different
values of the $\lambda$ parameter that trades off accuracy vs. robustness (cf.
Equation~\ref{eq:final:loss:ex:1}; {\sc Hein}
has a corresponding parameter).
We report in the following the results that were obtained with the  $\lambda$ parameter value that matched
most closely the learning without robustness regularization in terms of test set accuracy.
The second and third column of Table~\ref{tab:accuracies} show the obtained accuracies.
Note that here we are not so much interested in improving the state of the art in terms of accuracy,
but in improving robustness of reasonably accurate models.
Therefore, we  compare the robustness scores on models that have similar accuracies.

\begin{table}[ht]
\caption{Classification accuracies for the three setups for all the datasets.}
\label{tab:accuracies}
\begin{center}
\begin{small}
\begin{sc}
\begin{tabular}{lcccr}
\toprule
Data set & No Reg. & Hein & Ours \\
\midrule
MNIST     & 99.42\% & 99.12\% & 99.49\% \\
CIFAR-10 & 91.15\% & 90.61\% & 90.38\% \\
SVHN    & 92.64\% & 92.08\% & 92.33\% \\
F-MINST    & 93.47\% & 92.80\% & 93.57\%  \\
\bottomrule
\end{tabular}
\end{sc}
\end{small}
\end{center}
\end{table}



For evaluating the robustness of the models, we used the adversarial example construction described in Section \ref{sec:robeval}.
In Figure \ref{fig:gopfert} we plot the $R^f_{Q_{\epsilon},\Hone,p}$ scores against $\epsilon$-values.
These are essentially the  marginal curves defined by \cite{gopfert2019adversarial}. The areas under these
curves, i.e. our $R^f_{{\cal Q}_{[0,d]},\Hone,p}$ score are reported in the legends, denoted with $R$. 
The results suggest that our proposed method allows to learn more robust models while keeping the accuracy almost unchanged. We also evaluated the effectiveness of the three methods BG, FGSM and PGD 
for generating adversarial examples. Due to the space limitation this analysis is reported 
in the supplementary material.

\section{Unknown, large distribution shifts}
In contrast to the adversarial robustness considered in the previous section,
robustness under distribution shift or common corruptions~\citep{rusak2020simple}
addresses the situation where new test cases may differ substantially from the
original training data, which in our framework means that the conditional distribution
$Q(\cdot | x)$ is not concentrated in an $\epsilon$-neighborhood of $x$.
Without such a locality assumption on $Q$, it is then very difficult to encapsulate
a sufficiently general robustness requirement by the specification of a specific
family ${\cal Q}$ in analogy to ${\cal Q}_{[0,d]}$ of the previous section.

Recent investigations on distribution shift robustness are mostly based on image
benchmark datasets, where an existing collection of labeled images is extended
with additional test sets consisting of original test images that are modified by
different types of corruptions, such as blurring, rotations, or change in brightness.
Corrupted test sets of this kind have been introduced for the MNIST character recognition
set~\citep{mu2019mnist}, and for ImageNet~\citep{hendrycks2018benchmarking}. The goal
then is to train a model on original, uncorrupted, training data, such that it maintains
high accuracy on the corrupted test images. 
Trying to obtain robustness in this setting by robust training
techniques developed for adversarial robustness has met with limited
success~\citep{mu2019mnist,rusak2020simple}. Current state-of-the-art approaches for
corruption robustness are based on training set augmentation, where additional
training instances are generated from a mixture of pre-defined image perturbations~\citep{hendrycks2019augmix},
or added Gaussian noise sampled from a noise distribution optimized by co-training the
classifier and the noise generator~\citep{rusak2020simple}. 

Most existing approaches are quite tightly linked to image data and neural network classifiers.
We propose a  general method that is not based on any specific properties of image data, and that
can be combined with any type of classifier. Seeing that in this context a preservation of
accuracy is the goal (not resistance to adversarial attacks), we are aiming for robustness as
measured by  $R^f_{Q,\Hid,p}$ (cf. Proposition~\ref{prop:accpreservation}). However, the transition kernel
$Q$ modeling the shift in the data distribution is not known a-priori. Our general idea is
to
\begin{description}
\item[1.] learn a classifier $f$ from labeled training examples $(x,y)$ sampled from $P(X)P(Y|X)$;
\item[2.] from original data points $x$, and new unlabeled examples $\tilde{x}$ sampled from $Q\circ P$, learn  a
  \emph{transfer mapping} $T$ that approximates an inversion of the transition
  $x\mapsto  \tilde{x} \sim Q( | x)$;
\item[3.] classify new test instances $\tilde{x}$ as $f(T(\tilde{x}))$. 
\end{description}

The learning of $f$ and $T$ are completely separate, which allows the method to be used in conjunction with 
any type of classifier. Since in step 2. we are using examples from  $Q\circ P$, this approach
does not follow the strict condition that all training has to be done on the original, uncorrupted
training set. However, the approach reflects a realistic application scenario, where one 
modifies an original classifier $f$ to a classifier $f\circ T$ adapted to a changing data distribution.
It is important to note that for this adaptation we neither require labeled examples $(\tilde{x},y)$, nor
do we use  supervision in the form of given pairs $(x,\tilde{x})$ for which $\tilde{x}\sim Q(\cdot| x)$. 
In effect, our approach can be seen as a type of \emph{transductive  transfer learning}
in the sense of~\citep{pan2009survey}.

Our approach for constructing a transfer model $T$  follows
a paired auto-encoder architecture that in similar form has previously
been used for the quite different application of deep fake generation~\citep{DBLP:journals/corr/abs-1909-11573}.
We jointly train a pair of autoencoders to reconstruct
samples of original points $x$, and corrupted points $\tilde{x}$. 
The two autoencoderes share
the same encoder $E$, but have different decoders, 
$D_1$ and $D_2$, as depicted
in Figure~\ref{fig:deepfake} (left) for the case of clean and corrupted MNIST digits.
The transfer function $T$ then is defined as
$T(\tilde{x})=D_1(E(\tilde{x}))$
(see Figure~\ref{fig:deepfake} (right)).

\begin{table*}[ht]
  \caption{Classification accuracies for NN and SVM models using different robust
    robust training techniques. 
For both models, the first row reprents the average accuracy over the 
15 different corruption types. The second row shows the accuracy on the 
original MNIST test set. The standard deviation are estimated by training
the models (including the autoencoders) with 5 different weight initializations.
}
\label{tab:cmnist:svm:nn}
\begin{center}
\begin{small}
\begin{sc}
\setlength\tabcolsep{3.0pt}
\begin{tabular}{cccccccccc}
\toprule
\multirow{2}{*}{Model} & \multirow{2}{*}{Plain} & \multirow{2}{*}{Gauss} & \multicolumn{4}{c}{Transfer-Separate} & \multicolumn{3}{c}{Transfer-Joint} \\
 \cmidrule(lr){4-7}
 \cmidrule(lr){8-10}
 & & & 0 & 128 & 1024 & 30k & 128 & 1024 & 30k \\
 \cmidrule(lr){1-1}
 \cmidrule(lr){2-2}
 \cmidrule(lr){3-3}
 \cmidrule(lr){4-4}
 \cmidrule(lr){5-5}
 \cmidrule(lr){6-6}
 \cmidrule(lr){7-7}
 \cmidrule(lr){8-8}
 \cmidrule(lr){9-9}
 \cmidrule(lr){10-10}

\multirow{2}{*}{NN} & 86.3$\pm$0.52 & 88.5$\pm$0.38 & 86.4$\pm$0.87 & 88.5$\pm$0.44 & 91.1$\pm$0.16 & 92.8$\pm$0.09 & 88.5$\pm$0.17 & 91.5$\pm$0.05 & 92.1$\pm$0.15 \\
& 99.1$\pm$0.03 & 99.3$\pm$0.05 & 99.1$\pm$0.03 & 99.0$\pm$0.02 & 99.0$\pm$0.04 & 99.1$\pm$0.01 & 99.0$\pm$0.01 & 99.1$\pm$0.05 & 99.1$\pm$0.01\\ 
\midrule
\multirow{2}{*}{SVM} & 69.7 & 70.6$\pm$0.07 & 70.7$\pm$0.74 & 81.0$\pm$0.89  & 86.2$\pm$0.41 & 87.8$\pm$0.30 & 83.1$\pm$0.67 & 85.4$\pm$0.41 & 85.4$\pm$0.40 \\
& 98.4 & 97.9$\pm$0.02 & 98.3$\pm$0.06 & 98.3$\pm$0.06 & 98.2$\pm$0.02 & 98.3$\pm$0.03 & 98.2$\pm$0.07 & 98.3$\pm$0.05 & 98.3$\pm$0.03 \\

\bottomrule
\end{tabular}

\end{sc}
\end{small}
\end{center}
\end{table*}

We evaluate our method using the
C-MNIST~\citep{mu2019mnist} suite of 15 corruptions
applied to the MNIST training and test sets.
We consider two different classifers:
a Neural Network (NN), and a support vector machine (SVM) with RBF kernel. 
For the NN we used the same architecture as used
by~\cite{mu2019mnist}. The details regarding the structure
and the training of $f$, $E$, $D_1$, and $D_2$ are reported 
in the supplementary material. 
We use  two  different evaluation setups:
\begin{itemize}
\item \textsc{Transfer-Separate}: we train and test transfer models with data of one of the
  15 corruption types at a time. Hence, in total
  we construct 15 different pairs of autoencoders. Corrupted test images
  are classified using the transfer model learned from examples of the corresponding
  corruption type. 
  \item \textsc{Transfer-Joint}: for training, we  pool all the 15
    corruption types into one set for training a single  pair of autoencoders.
    Test images with all corruption types are classified using the same resulting transfer model.
\end{itemize}
For both \textsc{Transfer-Separate} and \textsc{Transfer-Joint}
we use for training $N_o=60k-15N_c$ examples $x$ from the original MNIST
dataset and $N_c \in \{0, 128, 1024, 30k\}$ examples $\tilde{x}$ for each of
the 15 corruption classes  from the 
C-MNIST training set. We ensure that none of the $\tilde{x}$  used for training
is the corrupted  version of a clean $x$ also used for training.
The case $N_c=0$, means that we only train
one autoencoder consisting of $E$, and $D_1$ which then still can be used as a transfer
model for new, corrupted examples. There is no difference between \textsc{Transfer-Joint}
and \textsc{Transfer-Separate} in this case.

\begin{figure}[h]
\centering
\centerline{ \includegraphics[width=0.5\columnwidth]{./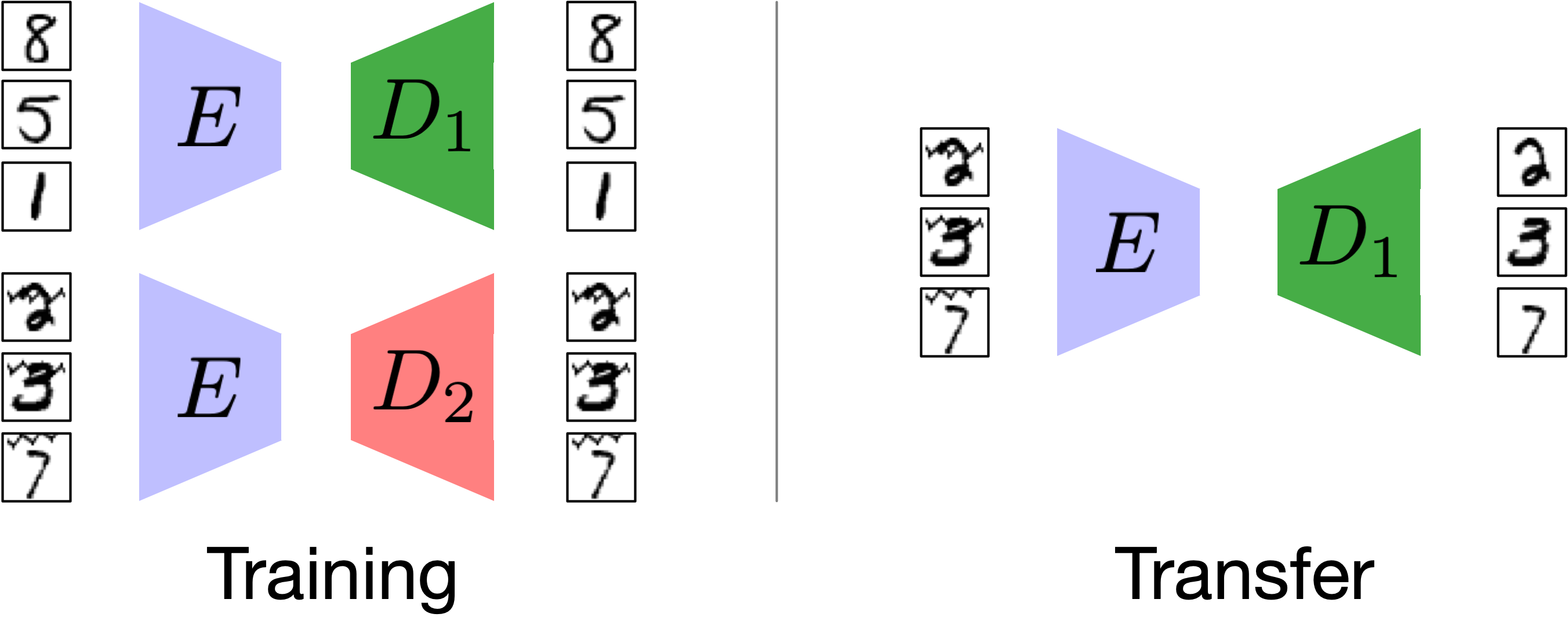}}
\caption{Transfer by paired auto-encoders}
\label{fig:deepfake}
\end{figure}

As baselines we consider \textsc{Plain} classifiers wich are obtained by standard
training on clean images, and then used without modifications to classify noisy images.
As a second baseline we used data augmentation 
by adding Gaussian noise to the original MNIST pictures. The Gaussian noise centered
in 0 with standard deviation of 0.1, was applied independently for each pixel.
NN and SVM classifiers were
then learned from a mix of clean and noisy examples.

Table~\ref{tab:cmnist:svm:nn} shows a summary of results\footnote{More detailed results
are contained in the supplementary material}.
For each classifer we report the accuracy on corrupted test images,
averaged over the 15 corruption types,  and the accuracy for the original MNIST test set
We first observe that adding a robust training approach nowhere leads to a significant
decrease in the clean data accuracy. Both the \textsc{Plain} NN and SVM models are
significantly less accuracte on the noisy than the clean data, where the gap is even
more pronounced for SVM than NN. The \textsc{Gauss} baseline for robust training
in this experiment only led to a minor improvement. This is in contrast to findings
by \cite{rusak2020simple}, who reported  that Gaussian noise augmentation can be
quite effective. However, they also show that a very careful calibration of the Gaussian
noise generation is required. 
The results for both \textsc{Transfer} approaches show a significant improvement of classification
accuracy for the corrupted examples as the number of noisy training examples increases.
This improvement is consistent for the NN and SVM models. Most remarkably, perhaps,
the results for \textsc{Transfer-Separate} and \textsc{Transfer-Joint} are very similar,
showing that a single transfer model $T$ can handle a wide range of different data
transformations. 

\cite{mu2019mnist} reported an accuracy 
of $91.91\%$ for what should correspond exactly to our \textsc{Plain} NN model.
We were unable to reproduce their results, and only obtained the  $86.30\%\pm0.52$
shown in the table.
\cite{rusak2020simple} report and accuracy of $92.2\%$ for their trained Gaussian noise
augmentation approach. However, these results are not directly comparable, since
\cite{rusak2020simple} use a different neural network architecture.

\section{Conclusions}
We have introduced a general framework for defining robustness of classifiers.
Our framework allows flexible definitions for different types of robustness that in a uniform manner
permit to calibrate robustness objectives with respect to different assumptions on the nature of
adversarial attacks or distribution changes that one wants to protect against. Our framework
establishes a natural link between the recent work in robustness, and more established
work in the area of transfer learning. Considering two specific robustness concepts,   we have
developed new training methods for image classifiers for these robustness objectives.
Our results show particularly promising results in a learning setting where a classifier can
be incrementally trained to become more robust by being exposed to novel, unlabeled examples.

While being greatly motivated by the conceptual framework, our algorithmic solutions are not directly
linked to the formal foundations we laid in this paper. A possible line of future work can be the
development of more generic training methods that can then be simply instantiated for a given
robustness objectives. On the theoretical side, an interesting question would be to what extent
robustness for a given measure $R_{Q,H,G}$ implies lower bounds on the robustness according to
another  measure $R_{Q',H',G'}$, and, thus, whether one type of robustness as a training objective
typically leads to models that are robust in a wider sense.

\clearpage

\clearpage

\appendix
\title{Supplementary Material}
\maketitle
\section{Details for Section 3.3}

\subsection{Model architectures used in the experiments}
For the experiments, we chose a relatively simple  convolutional neural network, which is considered the state-of-the-art method for image classification (see, e.g.,~\cite{szegedy2017inception}), and it can still provide reasonable high accuracies. The convolutional neural network $f$ architecture used for MNIST~\cite{lecun1998gradient},  SVHN~\cite{netzer2011reading}, and F(ashion)-MNIST~\cite{xiao2017fashion} is depicted in Figure~\ref{fig:arch:f}. $N=28$, $C=1$ for F-MNIST and MNIST, while $N=32$, $C=3$ for SVHN. Each convolutional layer ($5 \times 5$ kernels; stride 1; 128 channels), is followed by batch normalization~\cite{ioffe2015batch} and LeakyReLU~\cite{maas2013rectifier}  activations ($\alpha=0.1$). The two dense layers has $1024$ and $10$ units respectively. The first is followed by batch normalization and LeakyReLU activations; the second is followed by softmax activations). For CIFAR-10~\cite{krizhevsky2009learning} we used for $f$ the ResNet20 architecture described in \cite{he2016deep}. 
\begin{figure}[ht]
\vskip 0.2in
\begin{center}
\centerline{ \includegraphics[width=\columnwidth]{./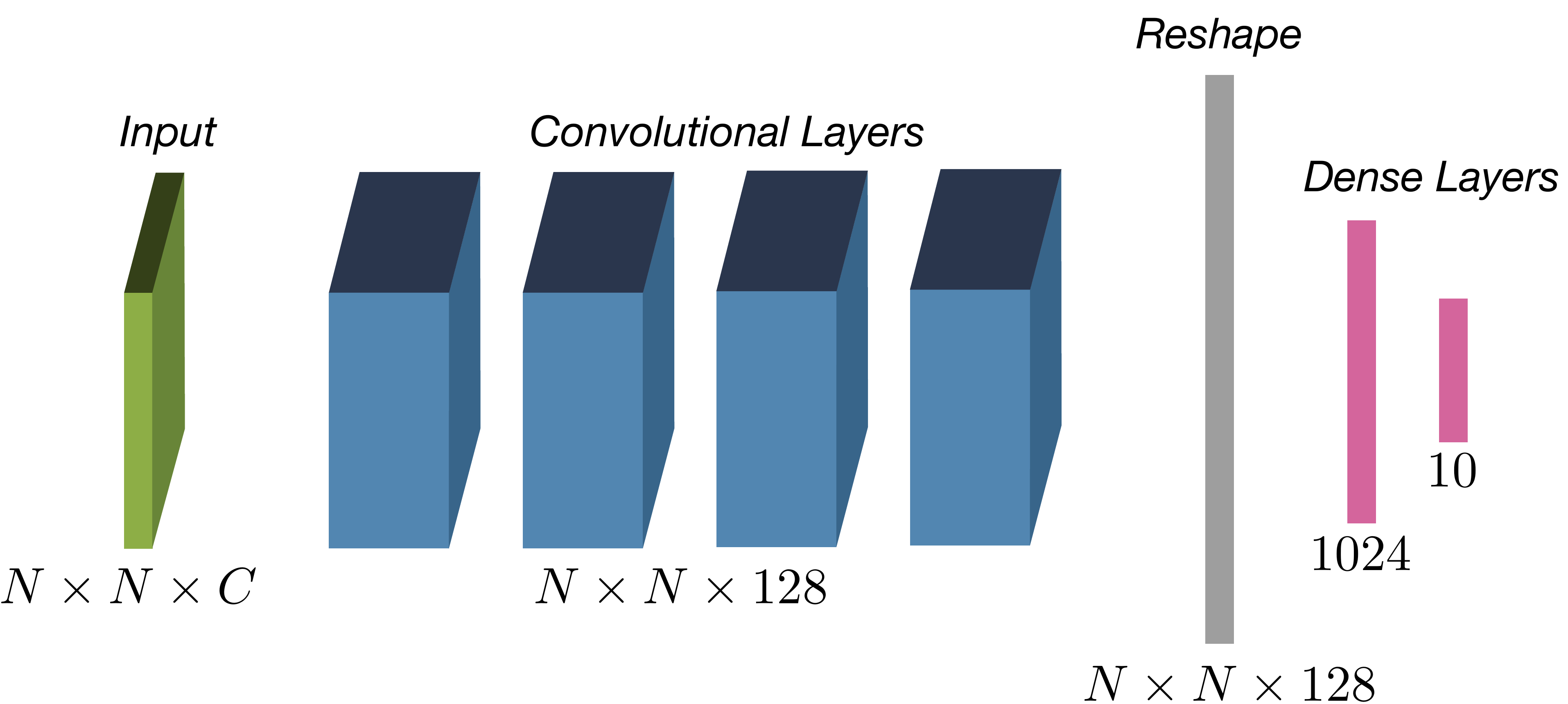}}
\caption{Architecture of $f$ used for F-MNIST, MNIST, and SVHN.}
\label{fig:arch:f}
\end{center}
\vskip -0.2in
\end{figure}

$g$ is another convolutional neural network, whose structure, depicted in Figure~\ref{fig:arch:g}, is the same for all the datasets (with the exception of the filters of the last layer that depend whether the input picture is grayscale or RGB).  $N=28$, $C=1$ for F-MNIST and MNIST, while $N=32$, $C=3$ for SVHN and CIFAR-10. Here again each intermediate convolutional layer ($5 \times 5$ kernels; stride 1; 128 channels), is followed by batch normalization and LeakyReLU activations ($\alpha=0.1$). The output layer is another convolutional layer with sigmoid activations.
\begin{figure}[ht]
\vskip 0.2in
\begin{center}
\centerline{ \includegraphics[width=0.8\columnwidth]{./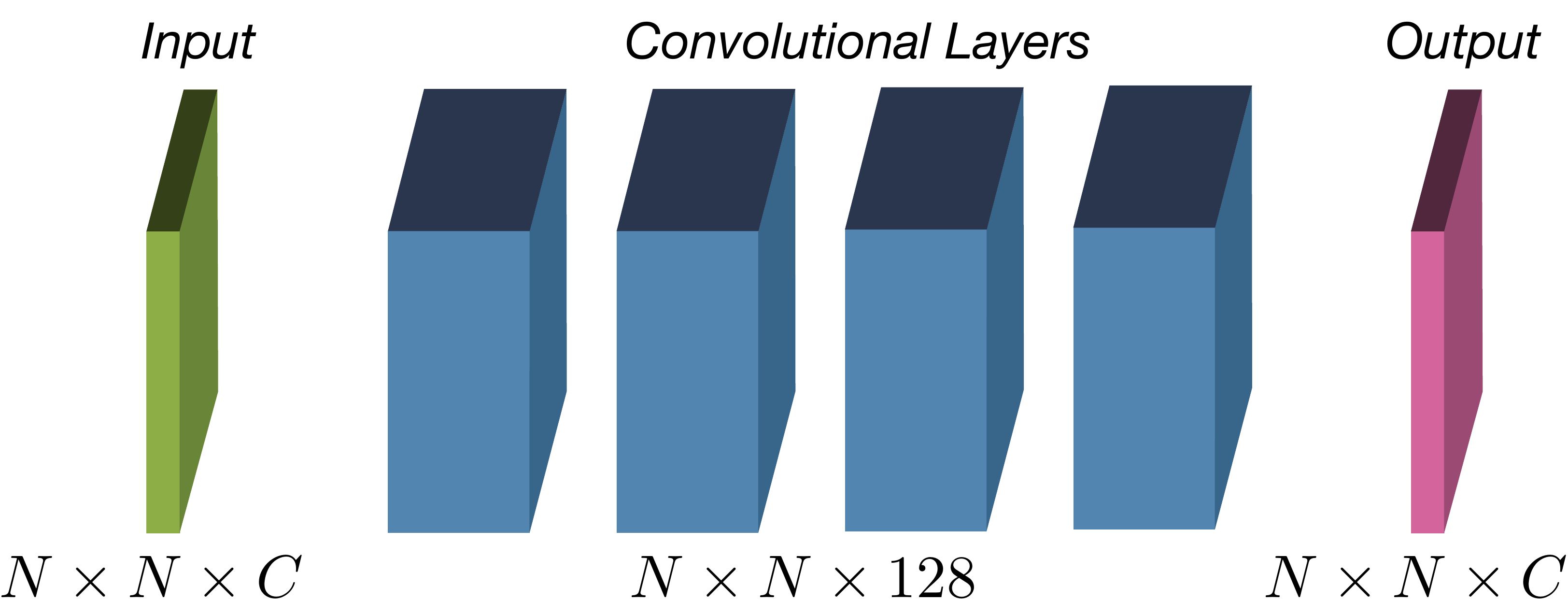}}
\caption{Architecture of $g$ used for F-MNIST, MNIST, SVHN. and CIFAR-10.}
\label{fig:arch:g}
\end{center}
\vskip -0.2in
\end{figure}

For each dataset we reserved $10\%$ of examples as validation set. During the training, we then selected the best model based on the accuracy on the validation set. For F-MNIST, MNIST and SVHN. The weights of $f$ and $g$ are updated with Adam~\cite{kingma2014adam} optimizer on mini-batches of size 32 for 10 epochs. For both $f$ and $g$ the learning rates are set to $0.001$ for the first 5 epochs , and $1\mathrm{e}{-4}$ after. For CIFAR-10 the weights of $f$ and $g$ are updated with Adam optimizer on mini-batches of size 32 for 200 epochs. The learning rates are set to $0.001$ for the first 80 epochs, $1\mathrm{e}{-4}$ until the 120th epoch, $1\mathrm{e}{-5}$ until the 160th epoch, and finally $0.5\mathrm{e}{-6}$ after.
We used the MNIST dataset contained in the tensorflow library (\url{https://www.tensorflow.org}) and the F-MNIST dataset downloadable from \url{https://github.com/zalandoresearch/fashion-mnist}.

\subsection{Additional results}

\begin{table*}[t]
\caption{For each method and dataset, the columns
represent the percentage of closest adversarial examples
to the original examples, after the binary search.}
\label{tab:norms}
\vskip 0.15in
\begin{center}
\begin{small}
\begin{sc}
\begin{tabular}{lccccccccc}
\toprule
  \multirow{2}[2]{*}{Dataset}
& \multicolumn{3}{c}{No Reg}
& \multicolumn{3}{c}{Hein}
& \multicolumn{3}{c}{Ours} \\
\cmidrule(lr){2-4}
\cmidrule(lr){5-7}
\cmidrule(lr){8-10}
 & BG & FGSM & PGD & BG & FGSM & PGD & BG & FGSM & PGD \\
\midrule
MNIST    & 97.84 & 0.58 & 1.58 & 71.92 &  0.88 & 27.20 & 13.68 &  0.52 & 85.80\\
CIFAR-10 & 99.78 & 0.00 & 0.22 & 99.96 &  0.00 &  0.04 & 98.77 &  0.00 &  1.23\\
SVHN     & 93.53 & 6.38 & 0.09 & 92.78 &  7.21 &  0.01 & 92.78 &  7.21 &  0.01 \\
F-MNIST  & 93.17 & 6.18 & 0.65 & 92.46 &  7.03 &  0.51 & 82.43 &  6.28 & 11.29\\
\bottomrule
\end{tabular}
\end{sc}
\end{small}
\end{center}
\vskip -0.1in
\end{table*}

Table \ref{tab:norms} reports the percentage of training examples for which the respective methods
led to the final closest adversarial example $x^*$ (for each dataset and for each method).
The results suggest that the BG technique produces in general closer adversarial examples to the original inputs.
It is worth pointing out that this shows that the results reported in Figure~\ref{fig:gopfert} are
mostly based on adversarial examples generated by the BG method, which the {\sc Hein} method is
designed to defend against.

Examples of adversarial examples generated with the BG algorithm and their $L_2$-distance to the original data point
are depicted in Figure \ref{fig:mnist-hein-example}. This illustrates that the adversarial example for our
method is visually significantly more distinct than the adversarial examples for the other two methods.
A larger and systematic set of illustrative examples is given in the supplementary material.

\begin{figure}[h]
\vskip 0.2in
\begin{center}
\centerline{ \includegraphics[width=\columnwidth]{./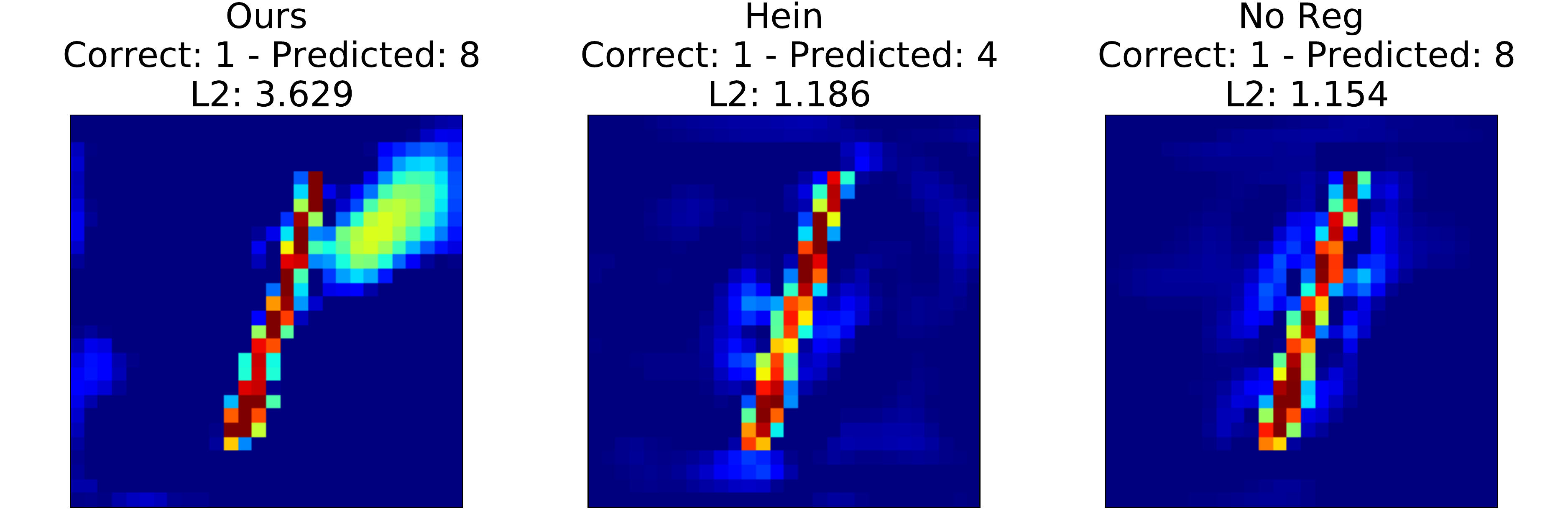}}
\caption{Examples of adversarial examples closed to the margin generated by using ~\cite{hein2017formal} for the three models. The three adversarial examples are generated according to the same input picture. From left to right, the adversarial examples are constructed from the models with the proposed regularization, the regularization proposed by~\cite{hein2017formal}, and no regularization.}
\label{fig:mnist-hein-example}
\end{center}
\vskip -0.2in
\end{figure}

\section{Details for Experiments of Section 4}
Layer notation:
\begin{itemize}
\item Conv2D($c$, $k$, $s$, $a$) / Conv2DTranspose($c$, $k$, $s$, $a$), where $c$ is 
the number of filters, $k$ is the 
kernel size, $s$ is the stride, and $a$ is the activation function;
\item MaxPooling($s$), where $s$ is the stride;
\item Dense($u$,$a$), where $u$ represents the number of units, and $a$ the activation function;
\item Dropout($r$), where $r$ represents the rate;
\end{itemize}

The Neural Network classifer has the following structure:
\begin{eqnarray}
& & \text{Conv2D(32, 3, 1, `relu')} \rightarrow \text{Conv2D(64, 3, 1,`relu')} \rightarrow \nonumber \\
& & \text{MaxPooling(2)} \rightarrow \text{Dropout(0.25)} \rightarrow \text{Flatten()} \rightarrow \nonumber \\
& & \text{Dense(128, `relu')} \rightarrow \text{Dropout(0.5)} \rightarrow \nonumber \\
& & \text{Dense(10, `softmax')} \nonumber 
\end{eqnarray}
We reserved $10\%$ of examples as validation set. During the training, we then selected the best model based on the accuracy on the validation set. The weights of the model are updated with Adam~\cite{kingma2014adam} optimizer on mini-batches of size 32 for 50 epochs. The learning rate is set to $0.001$.

SVM uses the RBF kernel with $\gamma=0.05$ and $C=5$.

The architecture for $E$ is the following:
\begin{eqnarray}
& & \text{Conv2D(32, 3, 2, `relu')} \rightarrow \text{Conv2D(64, 3, 2,`relu')} \rightarrow \nonumber \\
& & \text{Conv2D(128, 3, 2, `relu')} \rightarrow \text{Flatten()} \rightarrow \nonumber \\
& & \text{Dense(128, `linear')}\nonumber
\end{eqnarray}
$D_1$ and $D_2$ share the following architecture:
\begin{eqnarray}
& & \text{Dense(2048, `relu')} \rightarrow \text{Dense(7*7*64, `relu')}  \rightarrow \nonumber\\
& & \text{Reshape((7,7,64))} \rightarrow \nonumber \\
& & \text{Conv2DTranspose(64, 3, 2, `relu')} \rightarrow \nonumber \\
& & \text{Conv2DTranspose(32, 3, 2, `relu')} \rightarrow \nonumber \\
& & \text{Conv2DTranspose(1, 3, 1, `sigmoid')} \rightarrow \nonumber
\end{eqnarray}
$E$, $D_1$, and $D_2$ are jointly trained minimizing the reconstruction loss for 100 epochs. We used Adam with learning rate equals to $1e-3$ and batch size 32.

\begin{table*}
\caption{Classification accuracies for all 
the corruptions of C-MNIST datasets for the neural network model.}
\label{tab:cmnist:nn:extended}
\vskip 0.15in
\begin{center}
\begin{small}
\begin{sc}
\setlength\tabcolsep{1.5pt}
\begin{tabular}{lccccccccc}
\toprule
\multirow{2}{*}{Model} & \multirow{2}{*}{Plain} & \multirow{2}{*}{Gauss} & \multicolumn{4}{c}{Transfer-Separate} & \multicolumn{3}{c}{Transfer-Joint} \\
 \cmidrule(lr){4-7}
 \cmidrule(lr){8-10}
 & & & 0 & 128 & 1024 & 30k & 128 & 1024 & 30k \\
 \cmidrule(lr){1-1}
 \cmidrule(lr){2-2}
 \cmidrule(lr){3-3}
 \cmidrule(lr){4-4}
 \cmidrule(lr){5-5}
 \cmidrule(lr){6-6}
 \cmidrule(lr){7-7}
 \cmidrule(lr){8-8}
 \cmidrule(lr){9-9}
 \cmidrule(lr){10-10}
brightness & 98.8$\pm$0.02 & 75.1$\pm$0.52 & 98.8$\pm$0.02 & 98.6$\pm$0.03 & 98.6$\pm$0.03 & 98.9$\pm$0.04 & 98.7$\pm$0.05 & 98.9$\pm$0.05 & 98.9$\pm$0.01 \\
\midrule
canny edges & 81.2$\pm$0.46 & 79.9$\pm$0.53 & 81.5$\pm$0.46 & 81.1$\pm$0.63 & 81.7$\pm$0.64 & 81.8$\pm$0.54 & 81.5$\pm$0.32 & 81.8$\pm$0.20 & 81.5$\pm$0.20 \\
\midrule
dotted line & 97.2$\pm$0.52 & 98.3$\pm$0.39 & 97.0$\pm$0.52 & 97.3$\pm$0.31 & 97.4$\pm$0.23 & 98.4$\pm$0.08 & 93.4$\pm$0.51 & 95.3$\pm$0.38 & 98.7$\pm$0.22 \\
\midrule
fog & 93.9$\pm$0.60 & 64.6$\pm$0.44 & 93.9$\pm$1.00 & 93.0$\pm$0.72 & 97.3$\pm$0.21 & 98.9$\pm$0.02 & 86.4$\pm$1.48 & 95.2$\pm$0.43 & 95.3$\pm$0.66 \\
\midrule
glass blur & 89.7$\pm$0.48 & 92.8$\pm$0.42 & 89.8$\pm$0.56 & 92.2$\pm$1.48 & 93.9$\pm$0.22 & 93.4$\pm$0.56 & 92.9$\pm$0.08 & 92.6$\pm$0.30 & 92.3$\pm$0.24 \\
\midrule
identity & 99.1$\pm$0.03 & 99.3$\pm$0.71 & 99.1$\pm$0.03 & 99.0$\pm$0.02 & 99.0$\pm$0.04 & 99.1$\pm$0.01 & 99.0$\pm$0.01 & 99.1$\pm$0.05 & 99.1$\pm$0.01 \\
\midrule
impulse noise & 88.6$\pm$0.78 & 93.4$\pm$0.58 & 88.8$\pm$1.12 & 84.7$\pm$2.04 & 90.4$\pm$0.31 & 93.8$\pm$0.68 & 71.4$\pm$0.33 & 92.4$\pm$0.27 & 93.4$\pm$0.22 \\
\midrule
motion blur & 84.7$\pm$1.41 & 95.8$\pm$0.14 & 84.6$\pm$2.11 & 82.3$\pm$1.01 & 91.2$\pm$0.55 & 96.0$\pm$0.28 & 91.9$\pm$0.41 & 93.8$\pm$0.26 & 93.2$\pm$0.32 \\
  \midrule
rotate & 90.7$\pm$0.10 & 92.1$\pm$0.24 & 90.7$\pm$0.10 & 90.1$\pm$0.07 & 91.1$\pm$0.41 & 91.9$\pm$0.09 & 90.5$\pm$0.13 & 91.0$\pm$0.04 & 92.1$\pm$0.07 \\
\midrule
scale & 91.5$\pm$0.23 & 94.3$\pm$0.33 & 91.5$\pm$0.29 & 87.4$\pm$0.87 & 87.6$\pm$0.33 & 94.4$\pm$0.18 & 90.9$\pm$0.47 & 92.7$\pm$0.21 & 94.1$\pm$0.11 \\
\midrule
shear & 97.1$\pm$0.09 & 98.0$\pm$0.50 & 97.1$\pm$0.11 & 95.6$\pm$0.17 & 96.3$\pm$1.44 & 96.9$\pm$0.08 & 96.8$\pm$0.08 & 97.4$\pm$0.07 & 97.5$\pm$0.07 \\
\midrule
shot noise & 97.6$\pm$0.12 & 97.7$\pm$0.33 & 97.6$\pm$0.08 & 91.6$\pm$3.34 & 97.3$\pm$0.31 & 97.3$\pm$0.07 & 97.7$\pm$0.10 & 97.8$\pm$0.08 & 97.8$\pm$0.10 \\
\midrule
spatter & 96.8$\pm$0.06 & 98.4$\pm$0.40 & 96.8$\pm$0.08 & 94.7$\pm$1.38 & 95.1$\pm$0.20 & 97.2$\pm$0.09 & 96.8$\pm$0.16 & 96.9$\pm$0.07 & 96.9$\pm$0.10 \\
\midrule
stripe & 33.8$\pm$4.50 & 95.6$\pm$0.11 & 33.3$\pm$7.50 & 78.1$\pm$1.21 & 90.8$\pm$0.44 & 93.5$\pm$0.46 & 85.2$\pm$2.19 & 93.6$\pm$0.55 & 93.9$\pm$0.42 \\
\midrule
translate & 52.6$\pm$0.12 & 51.8$\pm$0.16 & 52.5$\pm$0.20 & 56.8$\pm$1.19 & 56.5$\pm$0.27 & 56.3$\pm$0.27 & 56.2$\pm$0.17 & 56.8$\pm$0.06 & 53.1$\pm$0.04 \\
\midrule
zigzag & 87.6$\pm$0.32 & 89.5$\pm$0.37 & 87.6$\pm$0.38 & 92.9$\pm$0.42 & 92.7$\pm$0.31 & 96.2$\pm$0.23 & 86.8$\pm$0.48 & 88.4$\pm$0.28 & 95.5$\pm$0.20 \\
\bottomrule
\end{tabular}
\end{sc}
\end{small}
\end{center}
\vskip -0.1in
\end{table*}

\begin{table*}[ht]
\caption{Classification accuracies for all 
the corruptions of C-MNIST datasets for the SVM model.}
\label{tab:cmnist:svm:extended}
\vskip 0.15in
\begin{center}
\begin{small}
\begin{sc}
\setlength\tabcolsep{2.0pt}
\begin{tabular}{lccccccccc}
\toprule
\multirow{2}{*}{Model} & \multirow{2}{*}{Plain} & \multirow{2}{*}{Gauss} & \multicolumn{4}{c}{Transfer-Separate} & \multicolumn{3}{c}{Transfer-Joint} \\
 \cmidrule(lr){4-7}
 \cmidrule(lr){8-10}
 & & & 0 & 128 & 1024 & 30k & 128 & 1024 & 30k \\
 \cmidrule(lr){1-1}
 \cmidrule(lr){2-2}
 \cmidrule(lr){3-3}
 \cmidrule(lr){4-4}
 \cmidrule(lr){5-5}
 \cmidrule(lr){6-6}
 \cmidrule(lr){7-7}
 \cmidrule(lr){8-8}
 \cmidrule(lr){9-9}
 \cmidrule(lr){10-10}
brightness & 69.7 & 97.7$\pm$0.02 & 98.1$\pm$0.09 & 98.0$\pm$0.03 & 97.9$\pm$0.07 & 98.2$\pm$0.03 & 98.1$\pm$0.04 & 98.1$\pm$0.02 & 98.1$\pm$0.06\\
\midrule
canny edges & 69.7 & 34.4$\pm$0.05 & 60.4$\pm$1.12 & 44.3$\pm$1.82 & 72.5$\pm$1.03 & 70.3$\pm$1.14 & 58.5$\pm$1.13 & 62.7$\pm$1.49 & 61.5$\pm$1.18\\
\midrule
dotted line & 69.7 & 97.0$\pm$0.02 & 97.2$\pm$0.22 & 94.1$\pm$0.54 & 95.2$\pm$0.50 & 98.0$\pm$0.08 & 95.5$\pm$0.27 & 96.1$\pm$0.39 & 96.5$\pm$0.23\\
\midrule
fog & 69.7 & 58.8$\pm$0.12 & 94.2$\pm$0.94 & 92.4$\pm$0.65 & 96.8$\pm$0.18 & 98.2$\pm$0.04 & 84.6$\pm$1.56 & 91.6$\pm$1.00 & 91.1$\pm$0.86\\
\midrule
glass blur & 69.7 & 93.8$\pm$0.06 & 90.7$\pm$1.03 & 94.0$\pm$2.35 & 95.0$\pm$0.32 & 96.2$\pm$0.15 & 94.7$\pm$0.34 & 95.6$\pm$0.19 & 95.5$\pm$0.19\\
\midrule
identity & 69.7 & 97.9$\pm$0.02 & 98.3$\pm$0.06 & 98.3$\pm$0.06 & 98.2$\pm$0.02 & 98.3$\pm$0.03 & 98.2$\pm$0.07 & 98.3$\pm$0.05 & 98.3$\pm$0.03\\
\midrule
impulse noise & 69.7 & 25.4$\pm$0.04 & 92.6$\pm$0.52 & 63.7$\pm$3.42 & 89.3$\pm$0.75 & 94.0$\pm$0.64 & 73.9$\pm$2.52 & 91.9$\pm$0.34 & 93.4$\pm$0.65\\
\midrule
motion blur & 69.7 & 85.8$\pm$0.11 & 84.7$\pm$0.77 & 83.0$\pm$0.65 & 89.3$\pm$0.40 & 93.4$\pm$0.45 & 87.3$\pm$0.34 & 88.0$\pm$0.57 & 88.1$\pm$0.70\\
\midrule
rotate & 69.7 & 86.9$\pm$0.02 & 88.2$\pm$0.13 & 87.7$\pm$0.03 & 87.3$\pm$0.20 & 88.2$\pm$0.06 & 88.1$\pm$0.12 & 88.5$\pm$0.08 & 88.5$\pm$0.12\\
\midrule
scale & 69.7 & 51.1$\pm$0.39 & 64.1$\pm$0.75 & 63.5$\pm$0.78 & 65.2$\pm$1.26 & 67.6$\pm$1.09 & 63.7$\pm$1.03 & 64.4$\pm$0.92 & 64.0$\pm$0.84\\
\midrule
shear & 69.7 & 93.8$\pm$0.02 & 94.8$\pm$0.13 & 93.3$\pm$0.16 & 92.4$\pm$0.19 & 94.5$\pm$0.15 & 94.7$\pm$0.12 & 95.0$\pm$0.08 & 95.0$\pm$0.07\\
\midrule
shot noise & 69.7 & 97.1$\pm$0.06 & 97.1$\pm$0.18 & 95.4$\pm$1.20 & 96.6$\pm$0.24 & 97.7$\pm$0.06 & 97.2$\pm$0.13 & 97.5$\pm$0.11 & 97.5$\pm$0.07\\
\midrule
spatter & 69.7 & 96.7$\pm$0.04 & 96.1$\pm$0.28 & 94.7$\pm$1.21 & 95.3$\pm$0.22 & 96.9$\pm$0.04 & 96.6$\pm$0.11 & 96.8$\pm$0.08 & 96.7$\pm$0.07\\
\midrule
stripe & 69.7 & 10.3$\pm$0.00 & 22.4$\pm$4.71 & 77.9$\pm$0.83 & 92.1$\pm$0.41 & 94.5$\pm$0.59 & 88.4$\pm$1.64 & 94.6$\pm$0.66 & 94.8$\pm$0.45\\
\midrule
translate & 69.7 & 21.6$\pm$0.05 & 23.5$\pm$0.15 & 22.5$\pm$0.31 & 21.3$\pm$0.30 & 21.0$\pm$0.18 & 23.3$\pm$0.10 & 23.2$\pm$0.13 & 22.9$\pm$0.22\\
\midrule
zigzag & 69.7 & 80.8$\pm$0.12 & 89.1$\pm$0.78 & 93.3$\pm$0.26 & 94.2$\pm$0.46 & 97.2$\pm$0.11 & 87.5$\pm$1.12 & 84.8$\pm$0.52 & 84.2$\pm$0.68\\
\bottomrule
\end{tabular}
\end{sc}
\end{small}
\end{center}
\vskip -0.1in
\end{table*}

\end{document}